\newtheorem{theorem}{Theorem}[section]
\newtheorem{definition}[theorem]{Definition}
\newtheorem{lemma}[theorem]{Lemma} 
\newtheorem{conjecture}[theorem]{Conjecture}
\newtheorem{corollary}[theorem]{Corollary}
\newcommand{\N}{\mathbb{N}} 
\newcommand{\R}{\mathbb{R}} 
\newcommand\circlearound[1]{%
  \tikz[baseline]\node[draw,shape=circle,anchor=base] {#1} ;}
\begin{document}
\title{An improvement of the convergence proof of the ADAM-Optimizer}
\author{Sebastian~Bock, Josef~Goppold, Martin~Weiß \\
\small \textit{Ostbayerische Technische Hochschule (OTH) Regensburg, Germany}\\
\textit{\{sebastian2.bock, martin.weiss\}@oth-regensburg.de}, goppold@mediamarktsaturn.com}

\markboth{Conference paper at OTH Clusterkonferenz 2018, 13.04.2018}%
{Conference paper at OTH Clusterkonferenz 2018, 13.04.2018}
\IEEEtitleabstractindextext{%
\begin{abstract}
A common way to train neural networks is the Backpropagation. This algorithm includes a gradient descent method, which needs an adaptive step size. In the area of neural networks, the ADAM-Optimizer is one of the most popular adaptive step size methods. It was invented in \cite{Kingma.2015} by Kingma and Ba. The $5865$ citations in only three years shows additionally the importance of the given paper. We discovered that the given convergence proof of the optimizer contains some mistakes, so that the proof will be wrong. In this paper we give an improvement to the convergence proof of the ADAM-Optimizer.
\end{abstract}

\begin{IEEEkeywords}
Artificial Neural Networks, Method of moments, ADAM-Optimizer
\end{IEEEkeywords}}

\maketitle

\IEEEdisplaynontitleabstractindextext

\IEEEpeerreviewmaketitle

\ifCLASSOPTIONcompsoc
\IEEEraisesectionheading{\section{Introduction}\label{sec:introduction}}
\else
\section{Introduction}
\label{sec:introduction}
\fi

\IEEEPARstart{N}{owadays} machine learning and artificial intelligence are very popular techniques but there is still a lot of research to do. To make methods like neural networks usable, we have to use learning algorithms, like the Backpropagation. Backpropagation is a kind of gradient descent method. In order to improve the convergence of such methods, it is a common way to introduce an adaptive step size. Adaptive step size is a numerical process to solve continuous problems with a discretization in single steps. Computation of the required step size, is still a big problem and there are many possible ways to define them. In this paper we discuss the ADAM-Optimizer from Kingma and Ba \cite{Kingma.2015}. The ADAM-Optimizer is one of the most popular gradient descent optimization
algorithms. It is implemented in common neural network frameworks, like TensorFlow, Caffe or CNTK. Kingma and Ba show experimentally, that the ADAM-Optimizer is faster than any other Optimizer (see figure \ref{fig:training_cost}).
\begin{figure}[h]
	\centering
	\includegraphics[width=7cm]{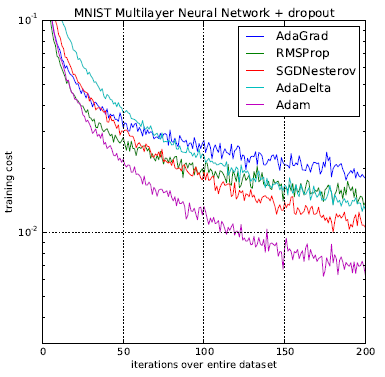}
	\caption{Comparison of different optimizer by training of multilayer neural networks on MNIST images. (Image from \cite{Kingma.2015})}
	\label{fig:training_cost}
\end{figure}
Sebastian Ruder says in \cite{Ruder.2017} "Insofar, Adam might be the best overall choice". All these points express the importance of this optimizer for neural networks. Independently of each other Josef Goppold and Sebastian Bock found out in their Master theses \cite{Goppold.2017} and \cite{Bock.2017} , that there are some mistakes in the convergence proof from Kingma and Ba. Even though we can not solve the proof completely, we achieve an improvement in some parts and can formulate a single conjecture, which would complete the proof.
\section{Neural Networks}
In neural networks we have a group of neurons and everyone of them has a weight $w$, which will be stored in the weight vector $w \in \R^n$. In the learning phase we modify this vector to obtain a network with the required intelligence. In order to evaluate the neural network with the current weight vector, we define an error function $e(w)$. This error function shall compare the label of the input with the output of the network. A popular method to minimize $e(w)$ is the Backpropagation, which uses the gradient descent method. At this point we can use the ADAM-Optimizer.
\section{Method of moments - ADAM}
\subsection{Method of moments}
The method of moments is based on an adaptive step size. At first we define our weight change rule.
\begin{definition} \textbf{(Weight change rule)}\\
\label{def:weightchangerule}
Let $w \in \R^n$ be the weight vector of our neural network, $e(w)$ the error function and $\eta \in \R^+$ the step size. Moreover let $t \in \N$ be the time stamp of the current training step. Then is $w(t)$ the weight vector in the training step $t$.
\begin{align*}
w(t+1) := w(t) + \Delta w(t) \qquad \text{with} \quad \Delta w(t):=-\frac{\eta}{2} \nabla_w e(w(t))
\end{align*}
\end{definition}
With a rule like in definition \ref{def:weightchangerule} we can improve our weights to minimize the error of our neural network. The shape of $\nabla w(t)$ depends on the chosen method. In our case the method of moments. \\ \\
The method of moments adds to the gradient descent step a fraction of the weight changes from the time stamp before. Mathematically it looks like:
\begin{definition}\textbf{(Method of moments)}\\
Let $\alpha \in \R^+$ be the decay rate of the old weight change. Furthermore let all parameters be defined as in definition \ref{def:weightchangerule}. Then the weight change will be defined as follows:
\begin{align*}
\Delta w \left(t \right) := -\frac{\eta}{2} \nabla_w e \left(w \left(t \right) \right) + \alpha \Delta w \left(t-1 \right)
\end{align*}
\end{definition}
In order to attain convergence of the method of moments, the restriction $\alpha \in ]0,1[$ should be applied.
\subsection{ADAM-Optimizer}
The adaptive moment estimization (ADAM) was invented by Kingma and Ba \cite{Kingma.2015} and is nowadays one of the most popular step size methods in the area of neural networks. The algorithm is defined as follows.
\begin{algorithm}
\label{algo:ADAM}
\KwData{$\eta_t := \frac{\eta}{\sqrt{t}}$ as step size, $\beta_1, \beta_2 \in (0,1)$ as decay rates for the moment estimates, $\beta_{1,t} := \beta_1 \lambda^{t-1}$ with $\lambda \in (0,1)$, $\epsilon > 0$, $e(w(t))$ as a convex differentiable error function and $w(0)$ as the initial weight vector.}
Set $m_0 =0$ as initial $1^{st}$ moment vector\\
Set $v_0 =0$ as initial $2^{nd}$ moment vector\\
Set $t =0$ as initial time stamp\\
\While{$w(t)$ not converged}{
$t = t+1$\\
$g_t = \nabla_w e(w(t-1))$\\
$m_t = \beta_{1,t} m_{t-1} + (1-\beta_{1,t})g_t$\\
$v_t = \beta_2 v_{t-1} + (1-\beta_2)g_t^2$\\
$\hat{m}_t = \frac{m_t}{(1-\beta_1^t)}$\\
$\hat{v}_t = \frac{v_t}{(1-\beta_2^t)}$\\
$w(t) = w(t-1) - \eta_t \frac{\hat{m}_t}{(\sqrt{\hat{v}_t} + \epsilon)}$}
\KwRet{$w(t)$}
\caption{ADAM-Optimizer}
\end{algorithm}
In \cite{Kingma.2015} they show experimentally, that the ADAM-Optimizer converges much faster for multi-layer neural networks or convolutional neural networks, than any other optimizer. Unfortunately there are some mistakes in the convergence proof of the paper \cite{Kingma.2015}, so that the proof fails to be correct. In this paper we introduce an improvement of the convergence proof of the ADAM-Optimizer.
\section{Convergence proof}
First of all, recall the following lemma which will give us an odd entrance to convex functions.
\begin{lemma}
\label{lemma:convex_function}
Let $D \subset \R^n$ be a convex set and $f \in C^1(\R^n, \R)$. Then $f$ is a convex function on $D$ if and only if the following condition holds:
\begin{align*}
f(y) \geq f(x) + \nabla f(x)^T (y-x)
\end{align*}
$\forall x,y \in D$ with $x \neq y$.
\end{lemma}
A proof of this lemma may be found in \cite{Forster.2016} site 37. In the following $e$ denotes a convex and differentiable function and $g_t := \nabla e_t(\overrightarrow{w}(t))$ is the gradient of $e$ at the times tamp $t$. Additional let $g_{t,i}$ be the $i$th element of the gradient and $g_{1:t,i} := (g_{1,i}, g_{2,i}, \cdots, g_{t,i})^T \in \R^t$. The described lemma $10.4$ in \cite{Kingma.2015} could unfortunately not be proven and we will refer to it as a conjecture.
\begin{conjecture}
\label{conjecture:Vermutung}
Let $\gamma := \frac{\beta_1^2}{\sqrt{\beta_2}}$ with $\beta_1, \beta_2 \in (0,1)$ and $\gamma < 1$. Moreover let $g_t$ be bounded with $||g_t||_2 \leq G$ and $||g_t||_\infty \leq G_\infty$. Then,
\begin{align*}
\sum\limits_{t=1}^T \frac{\hat{m}_{t,i}^2}{\sqrt{t \hat{v}_{t,i}}} \leq \frac{2}{(1 - \gamma)} \frac{1}{\sqrt{1-\beta_2}} ||g_{1:T,i}||_2
\end{align*}
\end{conjecture}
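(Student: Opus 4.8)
The plan is to proceed by induction on the horizon $T$, which is the structure by which such bounds for adaptive step size methods are usually obtained. For the base case $T=1$ one evaluates the recursions directly: since $\beta_{1,1}=\beta_1\lambda^{0}=\beta_1$ one gets $m_1=(1-\beta_1)g_1$, hence $\hat m_{1,i}=g_{1,i}$, and likewise $\hat v_{1,i}=g_{1,i}^2$, so the left-hand side reduces to $|g_{1,i}|$. The right-hand side equals $\frac{2}{1-\gamma}\frac{1}{\sqrt{1-\beta_2}}|g_{1,i}|$, and because $\gamma<1$ and $\beta_2<1$ both prefactors exceed $1$; thus the base case holds with room to spare.

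For the inductive step I would isolate the final summand,
\begin{align*}
\sum_{t=1}^{T} \frac{\hat m_{t,i}^2}{\sqrt{t\,\hat v_{t,i}}} = \sum_{t=1}^{T-1}\frac{\hat m_{t,i}^2}{\sqrt{t\,\hat v_{t,i}}} + \frac{\hat m_{T,i}^2}{\sqrt{T\,\hat v_{T,i}}},
\end{align*}
apply the induction hypothesis to the first sum, and attempt to absorb the last term into the increment $\frac{2}{1-\gamma}\frac{1}{\sqrt{1-\beta_2}}\bigl(||g_{1:T,i}||_2-||g_{1:T-1,i}||_2\bigr)$. To estimate this last term I would unroll the first-moment recursion into $m_{T,i}=\sum_{k=1}^{T}(1-\beta_{1,k})\bigl(\prod_{j=k+1}^{T}\beta_{1,j}\bigr)g_{k,i}$, bound the momentum weights by $(1-\beta_{1,k})\prod_{j=k+1}^{T}\beta_{1,j}\le\beta_1^{T-k}$ (using $\beta_{1,t}=\beta_1\lambda^{t-1}\le\beta_1$), and then apply the Cauchy--Schwarz inequality with weights $\beta_1^{T-k}$ to obtain $m_{T,i}^2\le\bigl(\sum_{k}\beta_1^{T-k}\bigr)\bigl(\sum_{k}\beta_1^{T-k}g_{k,i}^2\bigr)\le\frac{1}{1-\beta_1}\sum_{k}\beta_1^{T-k}g_{k,i}^2$. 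Finally I would bound the second moment from below one term at a time, $v_{T,i}\ge(1-\beta_2)\beta_2^{T-k}g_{k,i}^2$, so that each summand becomes a geometric factor times $|g_{k,i}|$, and then interchange the order of summation over $t$ and $k$ to produce a geometric series in the lag $t-k$.

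The hard part --- and the reason the statement is offered only as a conjecture --- is that this chain of estimates refuses to close under the stated hypothesis. Carrying out the last two steps, each summand acquires the factor $\beta_1^{T-k}/\beta_2^{(T-k)/2}=(\beta_1/\sqrt{\beta_2})^{T-k}$, so the relevant geometric ratio is $\beta_1/\sqrt{\beta_2}=\gamma/\beta_1$, which is strictly larger than $\gamma$ because $\beta_1<1$; in fact the natural argument only succeeds under the condition $\beta_1/\sqrt{\beta_2}<1$, which is strictly stronger than the assumed $\gamma=\beta_1^2/\sqrt{\beta_2}<1$. Consequently, for admissible parameters with $\beta_1^2/\sqrt{\beta_2}<1\le\beta_1/\sqrt{\beta_2}$ the series one is led to diverges, and the leftover prefactors $\sqrt{1-\beta_2^{T}}\,(1-\beta_1^{T})^{-2}$ together with the $\frac{1}{1-\beta_1}$ from Cauchy--Schwarz do not collapse into the single constant $\frac{2}{1-\gamma}$ either. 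I therefore expect the decisive obstacle to be the per-step increment inequality $\frac{\hat m_{T,i}^2}{\sqrt{T\,\hat v_{T,i}}}\le\frac{2}{1-\gamma}\frac{1}{\sqrt{1-\beta_2}}\bigl(||g_{1:T,i}||_2-||g_{1:T-1,i}||_2\bigr)$: bridging the gap between the exponent $\beta_1/\sqrt{\beta_2}$ that the termwise estimates deliver and the exponent $\gamma$ that the statement presupposes would require either strengthening the hypothesis to $\beta_1/\sqrt{\beta_2}<1$ or finding a lower bound on $v_{T,i}$ sharper than the single-term one, and it is exactly this gap that the present paper must leave open.
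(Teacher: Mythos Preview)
The paper does not prove this statement at all: it is deliberately presented as a \emph{conjecture}, with the authors noting that the corresponding lemma from Kingma and Ba ``could unfortunately not be proven'' and that completing it is left to future work. There is therefore no proof in the paper to compare your attempt against.

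Your proposal is consistent with this: you do not actually claim to prove the inequality, but rather sketch the natural induction-plus-unrolling strategy and then explain precisely where it breaks. That diagnosis is sound. Unrolling $m_{T,i}$, applying Cauchy--Schwarz with geometric weights, and lower-bounding $v_{T,i}$ term by term indeed produces the ratio $\beta_1/\sqrt{\beta_2}$ in the resulting geometric series, not $\gamma=\beta_1^2/\sqrt{\beta_2}$, so the argument only closes under the strictly stronger hypothesis $\beta_1<\sqrt{\beta_2}$. Your observation that parameters with $\beta_1^2/\sqrt{\beta_2}<1\le\beta_1/\sqrt{\beta_2}$ are admissible under the stated assumptions yet make the termwise series diverge is exactly the obstruction; the paper simply does not attempt to bridge it and uses the conjecture as a black box inside the proof of Theorem~\ref{theorem:Main_Theorem}.

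In short: there is no discrepancy to flag, because the paper offers no proof. Your write-up goes further than the paper by pinpointing the concrete obstacle, and that analysis is correct.
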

In the next step we will define an error sum, which calculates the difference between the minimum and the current value of $e\left(w\left(t\right)\right)$.
\begin{definition}\textbf{(Error sum)} \\
\label{def:Error_sum}
Let $\overrightarrow{w}^* := \arg \min\limits_{\overrightarrow{w} \in \chi} \sum\limits_{t=1}^{T} e_t(\overrightarrow{w})$ with $\chi$ as the set of $\overrightarrow{w}$, which will arise in the ADAM-Method. The error sum is then defined as:
\begin{align*}
R(T) := \sum\limits_{t=1}^T \left( e_t(\overrightarrow{w}_t)- e_t(\overrightarrow{w}^*) \right)
\end{align*}
\end{definition}
If we are able to show the convergence of $R(T)$ with respect to $T$, the convergence proof is done. We will do this with the following theorem.
\begin{theorem}
\label{theorem:Main_Theorem}
Let $g_t$ be bounded with $||g_t||_2 \leq G$ and $||g_t||_\infty \leq G_\infty$ for all $t \in \{1, \cdots, T \}$. Furthermore, suppose that the difference between $\overrightarrow{w}_t$ is bounded by $||\overrightarrow{w}_n - \overrightarrow{w}_m||_2 \leq D$ and $||\overrightarrow{w}_n - \overrightarrow{w}_m||_\infty \leq D_\infty$ with $n,m \in \{1, \cdots, T \}$. Furthermore let $\beta_1, \beta_2 \in (0,1)$, $\gamma := \frac{\beta_1^2}{\sqrt{\beta_2}} < 1$, $\eta_t := \frac{\eta}{\sqrt{t}}$ and $\beta_{1,t} := \beta_1 \lambda^{t-1}$ with $\lambda \in (0,1)$. Then the ADAM-Optimizer can be estimated as follows:
\begin{align*}
R(T) \leq& \frac{D_\infty^2}{2 \eta (1-\beta_1)} \sum\limits_{i=1}^d \sqrt{T \hat{v}_{T,i}} + \frac{d D^2_\infty G_\infty}{2 \eta (1-\beta_1)(1-\lambda)^2}\\ &+ \frac{\eta (\beta_1 +1)}{(1- \beta_1)\sqrt{1- \beta_2} (1-\gamma)} \sum\limits_{i=1}^d ||g_{1:T,i}||_2
\end{align*}
\end{theorem}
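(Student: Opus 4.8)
The plan is to bound the regret coordinate by coordinate and to collect the resulting sums into exactly the three families that appear on the right-hand side. First I would apply Lemma \ref{lemma:convex_function}: since every $e_t$ is convex and differentiable, $e_t(\overrightarrow{w}_t) - e_t(\overrightarrow{w}^*) \le g_t^T(\overrightarrow{w}_t - \overrightarrow{w}^*) = \sum_{i=1}^d g_{t,i}(w_{t,i} - w^*_i)$, so that $R(T) \le \sum_{t=1}^T \sum_{i=1}^d g_{t,i}(w_{t,i}-w^*_i)$. From here I would fix one coordinate $i$ and one step $t$, derive a pointwise estimate, and only sum at the very end.

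The central algebraic move is to eliminate $g_{t,i}$ in favour of the quantities the optimizer actually maintains. From $m_t = \beta_{1,t} m_{t-1} + (1-\beta_{1,t})g_t$ I would solve $g_{t,i} = \frac{1}{1-\beta_{1,t}}\big(m_{t,i} - \beta_{1,t} m_{t-1,i}\big)$; and from the update $w_{t+1,i} = w_{t,i} - \eta_t \hat m_{t,i}/(\sqrt{\hat v_{t,i}}+\epsilon)$ I would subtract $w^*_i$, square, and solve for the inner product, obtaining $\hat m_{t,i}(w_{t,i}-w^*_i) = \frac{\sqrt{\hat v_{t,i}}+\epsilon}{2\eta_t}\big[(w_{t,i}-w^*_i)^2 - (w_{t+1,i}-w^*_i)^2\big] + \frac{\eta_t}{2}\,\frac{\hat m_{t,i}^2}{\sqrt{\hat v_{t,i}}+\epsilon}$. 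Substituting both relations, each summand $g_{t,i}(w_{t,i}-w^*_i)$ splits into a telescoping distance term carrying $\frac{1-\beta_1^t}{1-\beta_{1,t}}\le\frac{1}{1-\beta_1}$, a term proportional to $\eta_t\hat m_{t,i}^2/\sqrt{\hat v_{t,i}}$, and a momentum cross term $-\frac{\beta_{1,t}}{1-\beta_{1,t}}m_{t-1,i}(w_{t,i}-w^*_i)$. The cross term I would control with Young's inequality $|xy|\le\frac12(\rho x^2+\rho^{-1}y^2)$, choosing $\rho$ so that it breaks into one piece of the form $\frac{\sqrt{\hat v_{t,i}}}{\eta_t}(w_{t,i}-w^*_i)^2$ and one further piece of the form $\frac{\eta_t\hat m_{t-1,i}^2}{\sqrt{\hat v_{t,i}}}$; here $\frac{1}{\sqrt{\hat v_{t,i}}+\epsilon}\le\frac{1}{\sqrt{\hat v_{t,i}}}$ is used so that the second-moment terms take the shape required by the conjecture.

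It then remains to sum the three families over $t$ and $i$. For the telescoping family I would use Abel summation together with $|w_{t,i}-w^*_i|\le D_\infty$; when the coefficient $\frac{\sqrt{\hat v_{t,i}}}{\eta_t}$ is non-decreasing in $t$ the sum collapses to $\frac{D_\infty^2}{2\eta(1-\beta_1)}\sum_{i=1}^d\sqrt{T\hat v_{T,i}}$, the first term. The $D_\infty^2$ piece of the cross term I would bound using $\sqrt{\hat v_{t,i}}\le G_\infty$ (which follows from $v_{t,i}\le(1-\beta_2^t)G_\infty^2$) and $1/\eta_t=\sqrt t/\eta$, and then sum the arising series through $\sum_{t\ge1}\sqrt t\,\lambda^{t-1}\le\sum_{t\ge1} t\,\lambda^{t-1}=\frac{1}{(1-\lambda)^2}$, which produces the middle term $\frac{d\,D_\infty^2 G_\infty}{2\eta(1-\beta_1)(1-\lambda)^2}$. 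Finally, both second-moment families are exactly of the form handled by Conjecture \ref{conjecture:Vermutung}; applying it yields $\frac{2}{(1-\gamma)\sqrt{1-\beta_2}}||g_{1:T,i}||_2$ for each, and the two contributions, weighted by $1$ and by $\beta_1$, combine into the $(1+\beta_1)$ factor of the third term.

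The step I expect to be the main obstacle is the telescoping of the distance terms: the clean collapse to $\sum_{i=1}^d\sqrt{T\hat v_{T,i}}$ silently requires the coefficients $c_{t,i}:=\frac{\sqrt{\hat v_{t,i}}}{\eta_t}$, equivalently $t\,\hat v_{t,i}$, to be non-decreasing in $t$, which is not guaranteed for ADAM. Dealing with this monotonicity honestly — either by imposing it as a hypothesis, or by controlling the total variation $\sum_{t}(c_{t,i}-c_{t-1,i})_+$ of the coefficients — is where the delicate work lies, whereas the only genuine analytic gap about the second-moment sums has already been isolated into Conjecture \ref{conjecture:Vermutung}, which I may invoke directly.
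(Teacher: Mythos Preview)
Your outline is essentially the paper's proof: convexity reduction, squaring the update to isolate $g_{t,i}(w_{t,i}-w^*_i)$, Young's inequality on the momentum cross term, Abel summation with the $D_\infty$ bound for the telescoping family, the estimate $\sqrt{\hat v_{t,i}}\le G_\infty$ together with $\sum_{t\ge 1} t\lambda^{t-1}=(1-\lambda)^{-2}$ for the middle term, and Conjecture~\ref{conjecture:Vermutung} for the two second-moment sums. The only bookkeeping deviation is in your Young split: the paper weights it with the \emph{previous} step's data, $\eta_{t-1}$ and $\hat v_{t-1,i}$, so that the moment piece comes out as $\eta_{t-1} m_{t-1,i}^2/\sqrt{\hat v_{t-1,i}}=\eta\, m_{t-1,i}^2/\sqrt{(t-1)\hat v_{t-1,i}}$, which after an index shift matches the conjecture directly; your choice of $\eta_t$ and $\hat v_{t,i}$ would leave $m_{t-1,i}^2/\sqrt{t\hat v_{t,i}}$, a mismatch you cannot repair without a monotonicity assumption on $\hat v$. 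Your closing remark is well taken: the paper does tacitly assume $\sqrt{t\hat v_{t,i}}-\sqrt{(t-1)\hat v_{t-1,i}}\ge 0$ when it pulls $D_\infty^2$ outside the Abel sum for term~$\circlearound{2}$.
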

\begin{proof}
With lemma \ref{lemma:convex_function} we can write for a convex differentiable function $e(w)$:
\begin{align*}
e_t \left(\overrightarrow{w}^* \right) &\geq e_t \left(\overrightarrow{w}_t\right) + g_t^T \left(\overrightarrow{w}^* - \overrightarrow{w}_t \right)\\
\Leftrightarrow e_t (\overrightarrow{w}_t) - e_t (\overrightarrow{w}^*) &\leq g_t^T \left(\overrightarrow{w}_t - \overrightarrow{w}^* \right)
\end{align*}
With the update rule from the ADAM-Optimizer:
\begin{align*}
\overrightarrow{w}_{t+1} &= \overrightarrow{w}_t - \eta_t \frac{\hat{m}_t}{\sqrt{\hat{v}_t}}\\
&= \overrightarrow{w}_t - \frac{\eta_t}{1 - \beta_1^t} \left(\frac{\beta_{1,t}}{\sqrt{\hat{v}_t}}m_{t-1} + \frac{(1- \beta_{1,t})}{\sqrt{\hat{v}_t}} g_t \right)
\end{align*}
Now we consider the $i$th component of $\overrightarrow{w}_t \in \R^d$.
\begin{align*}
\overrightarrow{w}_{t+1,i} - \overrightarrow{w}^*_{,i} =& \overrightarrow{w}_{t,i}- \overrightarrow{w}_{,i}^* - \eta_t \frac{\hat{m}_{t,i}}{\sqrt{\hat{v}_{t,i}}}\\
(\overrightarrow{w}_{t+1,i} - \overrightarrow{w}^*_{,i})^2 =& (\overrightarrow{w}_{t,i} - \overrightarrow{w}_{,i}^*)^2 - \frac{2 \eta_t \hat{m}_{t,i}}{\sqrt{\hat{v}_{t,i}}} + \eta_t^2 \left( \frac{\hat{m}_{t,i}}{\sqrt{\hat{v}_{t,i}}} \right)^2\\
g_{t,i} \left( \overrightarrow{w}_{t,i} - \overrightarrow{w}_{,i}^* \right) =& \frac{(1-\beta_1^t) \sqrt{\hat{v}_{t,i}}}{2 \eta_t (1-\beta_{1,t})} \left( \left( \overrightarrow{w}_{t,i} - \overrightarrow{w}_{,i}^* \right)^2 - \left( \overrightarrow{w}_{t+1} - \overrightarrow{w}_{,i}^* \right)^2 \right) \\
&- \underbrace{\frac{\beta_{1,t}}{(1-\beta_{1,t})} m_{t-1,i} \left( \overrightarrow{w}_{t,i} - \overrightarrow{w}_{,i}^* \right)}_{(*)} \\
&+ \frac{\eta_t \left( 1 - \beta_1^t \right) \sqrt{\hat{v}_{t,i}} }{2 \left( 1 - \beta_{1,t} \right)} \left( \frac{\hat{m}_{t,i}}{\sqrt{\hat{v}_{t,i}}} \right)^2
\end{align*}
In $(*)$ we multiply with $1 = \frac{\hat{v}_{t-1}^{\frac{1}{4}} \sqrt{\eta_{t-1}}}{\hat{v}_{t-1}^{\frac{1}{4}} \sqrt{\eta_{t-1}}}$ and use the binomial equation to simplify:
\begin{align*}
&\frac{\beta_{1,t}}{1-\beta_{1,t}} \left( \overrightarrow{w}^*_{,i} - \overrightarrow{w}_{t,i} \right) \frac{\hat{v}_{t-1}^{\frac{1}{4}} \sqrt{\eta_{t-1}}}{\hat{v}_{t-1}^{\frac{1}{4}} \sqrt{\eta_{t-1}}} =\\
 &= \frac{\beta_{1,t}}{1 - \beta_{1,t}} \left( \frac{\hat{v}_{t-1,i}^{\frac{1}{4}}}{\sqrt{\eta_{t-1}}} \left( \overrightarrow{w}_{,i}^* - \overrightarrow{w}_{t,i} \right) \sqrt{\eta_{t-1}} \frac{m_{t-1,i}}{\hat{v}_{t-1,i}^{\frac{1}{4}}} \right) \\
& \leq \underbrace{\frac{\beta_{1,t}}{1 - \beta_{1,t}}}_{\leq \frac{\beta_1}{1 - \beta_1}} \left( \frac{\sqrt{\hat{v}_{t-1},i} \left( \overrightarrow{w}_{,i}^*  - \overrightarrow{w}_{t,i} \right)^2}{2 \eta_{t-1}} + \frac{\eta_{t-1} m_{t-1,i}}{2 \sqrt{\hat{v}_{t-1,i}}} \right)
\end{align*}
If we put all these together we reach the following inequality. We separate it in five terms. Each of them will be handled on their own.
\begin{align*}
\underbrace{g_{t,i} \left( \overrightarrow{w}_{t,i} - \overrightarrow{w}_{,i}^* \right) }_{\circlearound{1}} &\leq \underbrace{\frac{\left( \left( \overrightarrow{w}_{t,i} - \overrightarrow{w}_{,i}^* \right)^2 - \left( \overrightarrow{w}_{t+1,i} - \overrightarrow{w}_{,i}^* \right)^2 \right) \sqrt{\hat{v}_{t,i}}}{2 \eta_t \left( 1 - \beta_1 \right) }}_{\circlearound{2}} \\
&+ \underbrace{\frac{\beta_{1,t}}{2 \eta_{t-1} \left( 1 - \beta_{1,t} \right)} \left( \overrightarrow{w}_{,i}^* - \overrightarrow{w}_{t,i} \right)^2 \sqrt{\hat{v}_{t-1,i}}}_{\circlearound{3}}\\
&+ \underbrace{\frac{\beta_1 \eta_{t-1} m^2_{t-1,i}}{2 \left( 1 - \beta_1 \right) \sqrt{\hat{v}_{t-1,i}}}}_{\circlearound{4}} + \underbrace{\frac{\eta_t \hat{m}^2_{t,i}}{2 \left( 1 - \beta_{1} \right) \sqrt{\hat{v}_{t,i}}}}_{\circlearound{5}}
\end{align*}
To get the link to the error sum, we sum over the elements of the gradient $i \in 1, \cdots, d$ and the time stamps $t \in 1, \cdots, T$. Then term $\circlearound{1}$ looks like:
\begin{align*}
\sum\limits_{t = 1}^T \sum\limits_{i = 1}^d g_{t,i} \left( \overrightarrow{w}_{t,i} - \overrightarrow{w}_{,i}^* \right) &= \sum\limits_{t = 1}^T g_t^T \left( \overrightarrow{w}_t - \overrightarrow{w}^* \right)\\
&\geq \sum\limits_{t = 1}^T \left( e_t \left( \overrightarrow{w}_t \right) - e_t \left( \overrightarrow{w}^* \right)\right)\\
&= R(T)
\end{align*}
Now we look at term $\circlearound{2}$. 
\begin{align*}
&\sum\limits_{i = 1}^d \sum\limits_{t=1}^T \frac{\left( \left( \overrightarrow{w}_{t,i} - \overrightarrow{w}^*_{,i} \right)^2 - \left( \overrightarrow{w}_{t+1,i} - \overrightarrow{w}_{,i}^* \right)^2 \right) \sqrt{\hat{v}_{t,i}}}{2 \eta_t \left( 1 - \beta_1 \right)} \\
&= \sum\limits_{i=1}^d \frac{1}{2 \eta_1 \left( 1 - \beta_1 \right)} \left( \overrightarrow{w}_{1,i} - \overrightarrow{w}_{,i}^* \right)^2 \sqrt{\hat{v}_{1,i}}\\
&+ \sum\limits_{i=1}^d \sum\limits_{t=2}^T \frac{1}{2 \eta_t \left( 1- \beta_1 \right)} \left( \overrightarrow{w}_{1,i} - \overrightarrow{w}_{,i}^* \right)^2 \sqrt{\hat{v}_{1,i}}\\
&- \underbrace{\sum\limits_{i=1}^d \sum\limits_{t=1}^T \frac{1}{2 \eta_t}\left( \overrightarrow{w}_{t+1,i} - \overrightarrow{w}^*_{,i} \right)^2 \sqrt{\hat{v}_{t,i}}}_{\circlearound{2a}}
\end{align*}
We can rewrite $\circlearound{2a}$:
\begin{align*}
\circlearound{2a} &= \sum\limits_{i=1}^d \sum\limits_{t=1}^T \frac{1}{2 \eta_{t-1} \left( 1-\beta_{1} \right)} \left( \overrightarrow{w}_{t,i} - \overrightarrow{w}^*_{,i} \right)^2 \sqrt{\hat{v}_{t-1,i}}\\
&+ \sum\limits_{i=1}^d \frac{1}{2 \eta_{T} \left( 1-\beta_{1} \right)} \left( \overrightarrow{w}_{T+1,i} - \overrightarrow{w}^*_{,i} \right)^2 \sqrt{\hat{v}_{T,i}}
\end{align*}
After all, the following results for $\circlearound{2}$.
\begin{align*}
\circlearound{2} &= \sum\limits_{i=1}^d \frac{1}{2 \eta_{1} \left(1-\beta_1 \right)} \underbrace{\left(\overrightarrow{w}_{1,i} - \overrightarrow{w}_{,i}^* \right)^2}_{\leq D^2_\infty} \sqrt{\hat{v}_{1,i}}\\
&+ \sum\limits_{i=1}^d \sum\limits_{t=2}^T \frac{1}{2 \left(1-\beta_1 \right)} \underbrace{\left( \overrightarrow{w}_{t,i} - \overrightarrow{w}_{,i}^* \right)^2}_{\leq D^2_\infty} \left( \frac{\sqrt{\hat{v}_{t,i}}}{\eta_t} - \frac{\sqrt{\hat{v}_{t-1,i}}}{\eta_{t-1}}\right)\\
&\underbrace{- \sum\limits_{i=1}^d \frac{1}{2 \eta_T \left(1-\beta_1 \right)} \left( \overrightarrow{w}_{T+1,i} - \overrightarrow{w}_{,i}^* \right)^2 \sqrt{\hat{v}_{T,i}}}_{\leq 0}\\
&\leq \frac{D_\infty^2}{2 \eta (1-\beta_1)} \left( \sum\limits_{i=1}^d \sqrt{\hat{v}_{1,i}} + \sum\limits_{i=1}^d \sum\limits_{t=2}^T \left( \sqrt{t \hat{v}_{t,i}} - \sqrt{(t-1) \hat{v}_{t-1,i}} \right) \right)\\
&= \frac{D^2_\infty}{2 \eta \left( 1-\beta_1 \right)} \sum\limits_{i=1}^d \sqrt{T \hat{v}_{T,i}}
\end{align*}
Now we look at term $\circlearound{3}$.
\begin{align*}
\circlearound{3} &\leq \sum\limits_{i=1}^d \sum\limits_{t=1}^T \frac{\beta_{1,t}}{2 \eta_t \left( 1-\beta_{1,t} \right)} \left( \overrightarrow{w}_{,i}^* - \overrightarrow{w_{t,i}} \right)^2 \sqrt{\hat{v}_{t-1,i}}\\
&= \frac{1}{2 \eta} \sum\limits_{t=1}^T \sum\limits_{i=1}^d \underbrace{ \left( \overrightarrow{w}_{,i}^* - \overrightarrow{w}_{t,i} \right)^2}_{\leq D^2_\infty}  \frac{\beta_{1,t}}{1 - \beta_{1,t}} \sqrt{t \hat{v}_{t-1,i}}\\
&\leq \frac{D_\infty^2}{2 \eta} \sum\limits_{i=1}^d \sum\limits_{t=1}^T \frac{\beta_{1,t}}{\left( 1-\beta_{1,t} \right)} \sqrt{t\hat{v}_{t-1,i}}
\end{align*}
With
\begin{align*}
\sqrt{\hat{v}_{t-1,i}} &= \sqrt{1-\beta_2} \sqrt{\frac{\sum\limits_{j=1}^{t-1} g_{j,i}^2 \beta_2^{t-1-j}}{1-\beta_2^{t-1}}}\\
&\leq \sqrt{1-\beta_2} G_\infty \sqrt{\frac{\sum\limits_{j=1}^{t-1} \beta_2^{t-1-j}}{1-\beta_2^{t-1}}}\\
&\leq \sqrt{1-\beta_2} G_\infty \sqrt{\frac{\sum\limits_{j=1}^{t-1} \beta_2^{j}}{1 -\beta_2^{t-1}}}\\
&\leq \sqrt{1-\beta_2} G_\infty \sqrt{\frac{1-\beta_2^{t-1}}{\left( 1-\beta_2^{t-1} \right) \left( 1-\beta_2 \right)}}\\
&\leq G_\infty
\end{align*}
follows
\begin{align*}
\circlearound{3} \leq \frac{D_\infty^2 G_\infty}{2 \eta} \sum\limits_{i=1}^d \sum\limits_{t=1}^T \frac{\beta_{1,t}}{1-\beta_{1,t}} \sqrt{t}
\end{align*}
For $\sum\limits_{t=1}^T \frac{\beta_{1,t}}{\left( 1- \beta_{1,t} \right)} \sqrt{t}$ we can estimate:
\begin{align*}
\sum\limits_{t=1}^T \frac{\beta_{1,t}}{\left( 1- \beta_{1,t} \right)} \sqrt{t} &\leq \sum\limits_{t=1}^T \frac{\beta_{1} \lambda^{t-1}}{\left( 1- \beta_{1} \right)} \sqrt{t} \\
&\leq \sum\limits_{t=1}^T \frac{\lambda^{t-1}}{\left( 1- \beta_{1} \right)} t\\
&= \frac{1}{1- \beta_{1}} \sum\limits_{t=0}^{T-1} \lambda^t \left( t+1 \right)\\
&= \frac{1}{1- \beta_{1}} \left( \sum\limits_{t=0}^{T-1} \lambda^tt + \sum\limits_{t=0}^{T-1} \lambda^t \right)\\
&= \frac{\left( \frac{\left( T-1\right) \lambda^{T+1} - T\lambda^T + \lambda}{\left( \lambda-1 \right)^2} + \frac{1-\lambda^T}{1-\lambda} \right)}{1- \beta_{1}} \\
&= \frac{\left(1- \underbrace{T\left( \lambda^T - \lambda^{T+1} \right)}_{\geq 0} - \underbrace{\lambda T}_{\geq 0} \right)}{\left( 1-\beta_1 \right) \left( \lambda -1 \right)^2} \\
&\leq \frac{1}{\left( 1-\beta_1 \right) \left( \lambda -1 \right)^2}
\end{align*}
Then $\circlearound{3}$ results in:
\begin{align*}
\circlearound{3} \leq \sum\limits_{i=1}^d \frac{D_\infty^2 G_\infty}{2 \eta \left( 1- \beta_1 \right) \left( 1- \lambda \right)^2} = \frac{d D_\infty^2 G_\infty}{2 \eta \left( 1-\beta_1 \right) \left( 1-\lambda \right)^2}
\end{align*}
For term $\circlearound{4}$ we estimate:
\begin{align*}
\circlearound{4} &= \frac{\beta_1 \eta}{2 \left(1-\beta_1 \right)} \sum\limits_{i=1}^d \sum\limits_{t=1}^T \frac{\hat{m}_{t-1,i}^2}{\sqrt{\left( t-1 \right) \hat{v}_{t-1,i}}}\\
&= \frac{\beta_1 \eta}{2 \left( 1-\beta_1 \right)} \sum\limits_{i=1}^d \sum\limits_{t=1}^T \frac{\hat{m}^2_{t-1,i}}{\sqrt{\left(t-1 \right) \hat{v}_{t-1,i}}} \underbrace{\left( 1-\beta_1^{t-1} \right)^2}_{\leq 1}\\
&\leq \frac{\beta_1 \eta}{2 \left( 1- \beta_1 \right)} \sum\limits_{i=1}^d \frac{2}{\left( 1- \gamma \right) \sqrt{1-\beta_2}} ||g_{1:T,i}||_2\\
&= \frac{\beta_1 \eta}{\left(1-\beta_1 \right) \sqrt{1-\beta_2} \left(1-\gamma \right)} \sum\limits_{i=1}^d ||g_{1:t,i}||_2
\end{align*}
Analogously to $\circlearound{4}$, for $\circlearound{5}$:
\begin{align*}
\sum\limits_{i=1}^d \sum\limits_{t=1}^T \frac{\eta_t}{2 \left(1-\beta_1 \right)} \frac{\hat{m}_{t,i}^2}{\sqrt{\hat{v}_{t,i}}} &= \frac{\eta}{2 \left(1-\beta_1 \right)} \sum\limits_{i=1}^d \sum\limits_{t=1}^T \frac{\hat{m}_{t,i}^2}{\sqrt{t\hat{v}_{t,i}}}\\
&\leq \frac{\eta}{2 \left(1-\beta_1 \right)} \sum\limits_{i=1}^d \frac{2 ||g_{1:T,i}||_2}{\left(1-\gamma\right) \sqrt{1-\beta_2}}\\
&= \frac{\eta \sum\limits_{i=1}^d ||g_{1:T,i}||_2}{\left( 1-\beta_1 \right) \sqrt{1-\beta_2} \left( 1-\gamma \right)}
\end{align*}
Both in $\circlearound{4}$ and in $\circlearound{5}$ we use conjecture \ref{conjecture:Vermutung}. Now we can combine both.
\begin{align*}
\circlearound{4} + \circlearound{5} = \frac{\eta \left(1+\beta_1 \right)}{\left(1-\beta_1 \right) \sqrt{1-\beta_2} \left(1-\gamma \right)} \sum\limits_{i=1}^d ||g_{1:T,i}||_2
\end{align*}
If we combine all terms, we get our assertion and the proof is finished.
\begin{align*}
R(T) \leq& \frac{D^2_\infty}{2 \eta \left( 1-\beta_1 \right)} \sum\limits_{i=1}^d \sqrt{T\hat{v}_{T,i}} + \frac{d D^2_\infty G_\infty}{2 \eta \left( 1-\beta_1 \right) \left(1-\lambda \right)^2}\\
&+ \frac{\eta \left(1+\beta_1 \right)}{\left(1-\beta_1 \right) \sqrt{1-\beta_2} \left(1-\gamma \right)} \sum\limits_{i=1}^d ||g_{1:T,i}||_2
\end{align*}
\end{proof}
Using Theorem \ref{theorem:Main_Theorem} we can prove the following corollary
\begin{corollary}
Let $e_t$ with $t=1, \cdots, T$ be convex with a bounded gradient $||\nabla e_t \left( \overrightarrow{w} \right)||_2 \leq G$, $||\nabla e_t \left( \overrightarrow{w} \right) ||_\infty \leq G_\infty$, $ \forall \overrightarrow{w} \in \R^d$. Furthermore, suppose the difference between $\overrightarrow{w}_t$ is bounded by $|| \overrightarrow{w}_n - \overrightarrow{w}_m||_2 \leq D$, $||\overrightarrow{w}_n - \overrightarrow{w}_m||_\infty \leq D_\infty$, $\forall m,n \in 1,\cdots, T$. Then the following convergence estimation for the ADAM-Method $\forall T \geq 1$ holds:
\begin{align*}
\frac{R(T)}{T} = \mathcal{O} \left( \frac{1}{\sqrt{T}} \right)
\end{align*}
\end{corollary}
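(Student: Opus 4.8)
The plan is to take the bound on $R(T)$ provided by Theorem \ref{theorem:Main_Theorem} as a black box, divide both sides by $T$, and argue that each of the three summands on the right-hand side becomes $\mathcal{O}(1/\sqrt{T})$ after this division. The structure of the theorem's bound makes this transparent: the right-hand side consists of one term that grows like $\sqrt{T}$, one term that is constant in $T$, and a third term that again grows like $\sqrt{T}$. Dividing by $T$ sends the first and third to $\mathcal{O}(1/\sqrt{T})$ and the constant term to the faster $\mathcal{O}(1/T)$, so the slowest decay dictates the overall rate.

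First I would handle the term $\frac{D_\infty^2}{2\eta(1-\beta_1)}\sum_{i=1}^d \sqrt{T\hat{v}_{T,i}}$. Here I reuse the chain of estimates already established for term $\circlearound{3}$ inside the proof of Theorem \ref{theorem:Main_Theorem}, which shows $\sqrt{\hat{v}_{t,i}} \leq G_\infty$ for every index $t$ and $i$. Hence $\sqrt{T\hat{v}_{T,i}} \leq G_\infty \sqrt{T}$, and the whole term is bounded by $\frac{d\,D_\infty^2 G_\infty}{2\eta(1-\beta_1)}\sqrt{T}$, i.e. it is $\mathcal{O}(\sqrt{T})$. The second term $\frac{d\,D_\infty^2 G_\infty}{2\eta(1-\beta_1)(1-\lambda)^2}$ contains no $T$ at all and is therefore simply $\mathcal{O}(1)$.

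For the third term I would exploit the componentwise gradient bound. Since $|g_{t,i}| \leq \|g_t\|_\infty \leq G_\infty$ for all $t$, we get $\|g_{1:T,i}\|_2 = \sqrt{\sum_{t=1}^T g_{t,i}^2} \leq G_\infty\sqrt{T}$, so that $\sum_{i=1}^d \|g_{1:T,i}\|_2 \leq d\,G_\infty\sqrt{T}$ and the third term is again $\mathcal{O}(\sqrt{T})$. Collecting the three contributions yields $R(T) = \mathcal{O}(\sqrt{T})$, and dividing by $T$ gives the asserted $\frac{R(T)}{T} = \mathcal{O}(1/\sqrt{T})$.

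I do not expect a genuine obstacle in this corollary: the only facts needed beyond the theorem are the two elementary estimates $\sqrt{\hat{v}_{T,i}} \leq G_\infty$ and $\|g_{1:T,i}\|_2 \leq G_\infty\sqrt{T}$, and the substance of the argument lives entirely in Theorem \ref{theorem:Main_Theorem} (and, ultimately, in the unproven Conjecture \ref{conjecture:Vermutung} on which that theorem relies). The one point worth stating explicitly is that all constants absorbed into the $\mathcal{O}$-notation depend only on the fixed problem data $d,\eta,\beta_1,\beta_2,\gamma,\lambda,D_\infty,G_\infty$ and are independent of $T$, so the claimed asymptotic rate is uniform in $T$.
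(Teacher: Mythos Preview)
Your proposal is correct and follows essentially the same route as the paper: divide the bound from Theorem~\ref{theorem:Main_Theorem} by $T$, use $\sqrt{\hat{v}_{T,i}}\leq G_\infty$ and $\|g_{1:T,i}\|_2\leq G_\infty\sqrt{T}$ to show the first and third terms are $\mathcal{O}(\sqrt{T})$ while the second is constant, and conclude. Your explicit remark that the hidden constants are independent of $T$ and your acknowledgment of the residual dependence on Conjecture~\ref{conjecture:Vermutung} are useful additions.
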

\begin{proof}
The same requirements apply as above. Then the inequality from theorem \ref{theorem:Main_Theorem} applies and because of $T > 0$ we can divide by $T$.
\begin{align*}
\frac{R(T)}{T} &\leq \frac{D^2_\infty}{2 \eta \left(1-\beta_1 \right)} \sum\limits_{i=1}^d \frac{\sqrt{\hat{v}_{T,i}}}{\sqrt{T}} + \frac{d D_\infty^2 G_\infty}{T 2 \eta \left(1-\beta_1 \right) \left(1-\lambda \right)^2}\\
&+ \frac{\eta \left(1+\beta_1 \right)}{T \left(1-\beta_1 \right) \sqrt{1-\beta_2} \left(1-\gamma \right)} \sum\limits_{i=1}^d ||g_{1:T,i}||_2
\end{align*}
With
\begin{align*}
\sum\limits_{i=1}^d ||g_{1:T,i}||_2 &= \sum\limits_{i=1}^d \sqrt{g_{1,i}^2 + g_{2,i}^2 + \cdots + g_{T,i}^2}\\
&\leq \sum\limits_{i=1}^d \sqrt{G^2_\infty + G^2_\infty + \cdots + G^2_\infty}\\
&= \sum\limits_{i=1}^d \sqrt{T}G_\infty\\
&= dG_\infty\sqrt{T}
\end{align*}
and
\begin{align*}
\sum\limits_{i=1}^d \sqrt{T\hat{v}_{T,i}} &\leq \sum\limits_{i=1}^d \sqrt{T}G_\infty\\
&\leq dG_\infty \sqrt{T}
\end{align*}
we can estimate:
\begin{align*}
\lim\limits_{T \rightarrow \infty}\frac{R(T)}{T} \leq \lim\limits_{T \rightarrow \infty}  \left(  \frac{1}{\sqrt{T}} + \frac{1}{\sqrt{T}} + \frac{1}{T} \right) = 0
\end{align*}
This proves the convergence speed $\mathcal{O} \left( \frac{1}{\sqrt{T}} \right)$ of the ADAM-Method.
\end{proof}
\section{Conclusion and outlook}
Machine learning and particularly neural networks are advancing fast. In future it will be an important part in our everyday life. Due to this situation it is very important to understand all methods and algorithms, which will come with this technology. To understand the convergence behavior of the ADAM-Optimizer, this paper shows an improvement of the convergence proof of \cite{Kingma.2015}. Unfortunately we have at least one conjecture which is still in question. Hopefully this will be proved in future works, so that we can use the ADAM-Optimizer without any concerns. Probably the whole proof can show us some opportunities in order to improve the algorithm's speed and efficiency, so that the learning time will decrease. Especially in the time of big data this could be a decisive advantage.
\section*{Acknowledgments}
This work was partially supported by Baumann GmbH and MediaMarktSaturn Retail Group GmbH.
\nocite{Kingma.2015}
\nocite{Kruse.2011}
\nocite{Rumelhart.1987}
\nocite{Bock.2017}
\nocite{Goppold.2017}
\bibliographystyle{IEEEtran} 
\bibliography{Literatur}{}

\end{document}